\documentclass[letterpaper]{article} 
\usepackage{aaai25}   
\usepackage{times}  
\usepackage{helvet}  
\usepackage{courier}  
\usepackage[hyphens]{url}  
\usepackage{graphicx} 
\urlstyle{rm} 
\usepackage{natbib}  
\usepackage{caption} 
\frenchspacing  
\setlength{\pdfpagewidth}{8.5in}  
\setlength{\pdfpageheight}{11in}  
%
\usepackage{algorithm}
\usepackage{algorithmic}
\usepackage{amsmath}
\usepackage{amssymb} 
\usepackage{float} 
\usepackage{adjustbox}
\usepackage{threeparttable}
\usepackage{amsfonts}
\usepackage[bb=dsserif]{mathalpha}
\usepackage{multirow}
\usepackage{colortbl}
\usepackage{tcolorbox}
\definecolor{light-gray}{gray}{0.94}
\usepackage{cleveref} 
\usepackage{tikz}
\usepackage{pifont} 
%
\usepackage{booktabs}
\usepackage{multirow}

\usepackage{amsthm} 
\newtheorem{theorem}{Theorem}[section]
\newtheorem{proposition}[theorem]{Proposition}
\newtheorem{lemma}[theorem]{Lemma}
\newtheorem{definition}[theorem]{Definition}


\usepackage{newfloat}
\usepackage{listings}
\DeclareCaptionStyle{ruled}{labelfont=normalfont,labelsep=colon,strut=off} 
\lstset{%
	basicstyle={\footnotesize\ttfamily},
	numbers=left,numberstyle=\footnotesize,xleftmargin=2em,
	aboveskip=0pt,belowskip=0pt,%
	showstringspaces=false,tabsize=2,breaklines=true}
\floatstyle{ruled}
\newfloat{listing}{tb}{lst}{}
\floatname{listing}{Listing}
%
\pdfinfo{
/TemplateVersion (2025.1)
}

\setcounter{secnumdepth}{0} 

%


\title{RCR-Router: Efficient Role-Aware Context Routing for Multi-Agent LLM Systems with Structured Memory}
\author{
    Jun Liu\textsuperscript{\rm 1,2}, 
    Zhenglun Kong\textsuperscript{\rm 3},
    Changdi Yang\textsuperscript{\rm 2},
    Fan Yang \textsuperscript{\rm 4},
    Tianqin Li \textsuperscript{\rm 1},
    Peiyan Dong \textsuperscript{\rm 5},
    Joannah Nanjekye  \textsuperscript{\rm 1},
    Hao Tang\textsuperscript{\rm 6}, 
    Geng Yuan\textsuperscript{\rm 7}, 
    Wei Niu\textsuperscript{\rm 7},
    Wenbin Zhang\textsuperscript{\rm 8},
    Pu Zhao\textsuperscript{\rm 2},
    Xue Lin\textsuperscript{\rm 2},
    Dong Huang\textsuperscript{\rm 1},
    Yanzhi Wang\textsuperscript{\rm 2} 
}
\affiliations{
    
    \textsuperscript{\rm 1}Carnegie Mellon University,
    \textsuperscript{\rm 2}Northeastern University, 
    \textsuperscript{\rm 3}Harvard University,
    \textsuperscript{\rm 4}Fujitsu Research of America,
    \textsuperscript{\rm 5}MIT, \\
    \textsuperscript{\rm 6}Peking University ,
    \textsuperscript{\rm 7}University of Georgia,
    \textsuperscript{\rm 8}Florida International University 
   


   
 %
}

\begin{document}

\maketitle

\renewcommand{\thefootnote}{\fnsymbol{footnote}}

\begin{abstract}

Multi-agent large language model (LLM) systems have shown strong potential in complex reasoning and collaborative decision-making tasks. However, most existing coordination schemes rely on static or full-context routing strategies, which lead to excessive token consumption, redundant memory exposure, and limited adaptability across interaction rounds.

We introduce \textbf{RCR-Router}, a modular and role-aware context routing framework designed to enable efficient, adaptive collaboration in multi-agent LLMs. To our knowledge, this is the first routing approach that dynamically selects semantically relevant memory subsets for each agent based on its role and task stage, while adhering to a strict token budget. A lightweight scoring policy guides memory selection, and agent outputs are iteratively integrated into a shared memory store to facilitate progressive context refinement.

To better evaluate model behavior, we further propose an {Answer Quality Score} metric that captures LLM-generated explanations beyond standard QA accuracy. Experiments on three multi-hop QA benchmarks—HotPotQA, MuSiQue, and 2WikiMultihop—demonstrate that RCR-Router reduces token usage (up to 30\%) while improving or maintaining answer quality. These results highlight the importance of structured memory routing and output-aware evaluation in advancing scalable multi-agent LLM systems. We will release code upon acceptance.

\end{abstract}

%

\section{Introduction}

Large Language Models (LLMs)~\cite{team2023gemini,touvron2023llama,vicuna2023,liu2024deepseek,lu2024deepseek} have achieved impressive results across a wide range of tasks, from language understanding to multi-step reasoning. Recently, \textbf{multi-agent LLM systems}~\cite{wu2023autogen,han2024llm,ye2025x,krishnan2025advancing}—compositions of specialized LLM agents cooperating over shared tasks—have emerged as a promising paradigm for complex, open-ended problem solving. By assigning different roles (e.g., planner, searcher, summarizer) to agents and allowing them to interact over structured workflows, these systems can better leverage modularity, specialization, and iterative reasoning.


Despite recent progress, current multi-agent LLM architectures~\cite{crewai2024,wu2023autogen,langgraph2025,GPTResearcher2025,gao2025txagent} still rely on relatively simplistic context management strategies. Two commonly adopted approaches in existing systems are \emph{static routing}~\cite{langgraph2025,gao2024agentscope,ye2025x}, which assigns each agent a fixed set of inputs based on predefined templates, and \emph{full-context routing}~\cite{crewai2024,wu2023autogen,hong2023metagpt}, which provides complete memory or interaction history to all agents at each step. Although these methods are simple to implement and have shown some effectiveness, they also exhibit critical limitations: excessive token consumption~\cite{liu2025rora,li2025mutual,liu2025structured,zhang2025towards,kong2025token}, redundant or irrelevant information processing~\cite{10.1609/aaai.v39i18.34078,ji2025computation,10.1145/3747842,yang2025fairsmoe}, and poor adaptability to evolving task requirements~\cite{liu2024tsla,yuan2022mobile,liu2023scalable}. These issues become increasingly problematic as multi-agent reasoning tasks grow in scale and complexity, ultimately impairing the overall coordination quality and system efficiency. 

To address these challenges, we introduce \textbf{RCR-Router}, a role-aware and context-efficient routing mechanism tailored for multi-agent LLM systems. RCR-Router employs a structured memory layer that stores agents' intermediate outputs across interaction rounds and dynamically routes context to each agent based on its functional \emph{role} and the current \emph{task stage}. This enables agents to focus on semantically relevant information while avoiding redundant input exposure. The routing policy $\pi_{\text{route}}$ is lightweight and modular, supporting both heuristic and learnable variants, and the overall architecture seamlessly integrates with existing multi-agent LLM workflows.


\begin{table}[t]

\centering
\caption{Comparison of Routing Strategies. Abbreviations: Dynamic = Dynamic Memory, Role-Aw. = Role-Aware, Tok. Budg. = Token Budgeted.}
\label{tab:baseline_comparison}
\begin{tabular}{c|c|c|c}
\toprule
\textbf{Routing} & \textbf{Dynamic} & \textbf{Role-Aw.} & \textbf{Tok. Budg.} \\
\midrule
Full-Context & \ding{55} & \ding{55} & \ding{55} \\
Static Routing  & \ding{55} & \checkmark & \ding{55} \\
\textbf{RCR (Ours)} & \checkmark & \checkmark & \checkmark \\
\bottomrule
\end{tabular}
\vspace{-3mm}
\end{table}
We summarize the conceptual differences between our method and standard routing paradigms in Table~\ref{tab:baseline_comparison}. While existing systems support multi-agent composition and interaction, they lack structured or dynamic memory routing and do not enforce token-level context budgeting.

Our contributions are as follows:

\begin{itemize}
    \item We propose \textbf{RCR-router}, a lightweight, modular routing strategy for multi-agent LLM systems, enabling context selection that improves answer quality and reduces token usage across multiple benchmarks.
    \item  We design an \textbf{iterative routing mechanism with feedback}, which allows agents to exchange structured output, update shared memory, and progressively refine their contextual understanding throughout multiple interactions.
    \item We formalize \textbf{role-aware and task-stage-aware routing policies}, supporting both heuristic and learned approaches, with role-specific token budget constraints.
    \item We conduct extensive experiments on multi-agent benchmarks (HotPotQA~\cite{yang2018hotpotqa}, MuSiQue~\cite{trivedi2022musique}, 2wikimultihop~\cite{xanh2020_2wikimultihop}), demonstrating that RCR-router reduces token consumption by 25–47\% across benchmarks without compromising performance.
\end{itemize}

Our results highlight that \textbf{efficient and adaptive context management is essential for scaling multi-agent LLM systems}, and that semantic-aware routing offers a principled and practical solution to this emerging challenge.

\section{The Proposed Method}
\subsection{Problem Formulation}
We consider a \textbf{multi-agent LLM system} composed of $N$ collaborative agents $\mathcal{A} = \{ A_1, A_2, \dots, A_N \}$ interacting over a shared task. Each agent $A_i$ operates with a specific \textit{role} $R_i$ (e.g., Planner,  Searcher, Recommender) and interacts with other agents and external tools in discrete \textit{interaction rounds} $t = 1, 2, \dots, T$.

At each round $t$, agents exchange messages and perform reasoning based on a Shared Memory Store $M_t$, which contains:
\begin{itemize}
    \item \textbf{Agent interaction history}: prior communication between agents;
    \item \textbf{Task-relevant knowledge}: external facts, retrieved documents, or tool outputs;
    \item \textbf{Structured state representations}: entities, plans, and tool traces encoded in structured formats (YAML, graphs, tables).
\end{itemize}

Each agent $A_i$ receives as input a \textit{routed context} $C_t^i \subseteq M_t$, which is selected by a centralized \textbf{RCR-router} according to the agent’s role $R_i$; the current task stage $S_t$;
and a token budget $B_i$ allocated to the agent.

Our framework supports a broader set of roles to address various task requirements; see the detailed example in Appendix D. 
Each agent $A_i$ then performs an \textbf{LLM Query} based on its routed context $C_t^i$:
\begin{equation}
LLM\_output^i_t = LLM( \text{Prompt}(C_t^i) ),
\end{equation}

The global objective of the system is to maximize collaborative task success while minimizing cumulative token consumption across all agents and rounds:
\begin{equation}
\max_{\pi_{\text{route}}} \ \mathbb{E} \left[ \text{TaskSuccess} - \lambda \cdot \sum_{t=1}^{T} \sum_{i=1}^{N} \text{TokenCost}(C_t^i) \right],
\end{equation}
where $\lambda$ is a tunable hyperparameter balancing task performance and efficiency.

This formulation enables RCR-router to perform \textit{adaptive, role-aware, and resource-efficient} context routing, which we show in Section Experiments  
leads to significant improvements in multi-agent LLM system performance.

\subsection{RCR-router: Role-Aware Context Routing with Semantic Abstraction}

\paragraph{Context Routing Framework.}
We name our architecture RCR-router to highlight its role-aware and modular context routing mechanism that governs how information is dynamically delivered to agents and how agent outputs contribute to shared memory. While we do not define a standalone protocol specification, RCR-router embodies a structured and extensible coordination mechanism that serves as an internal control layer within the multi-agent system.




\begin{figure*}[h]
    \centering
    \includegraphics[width=0.80\textwidth]{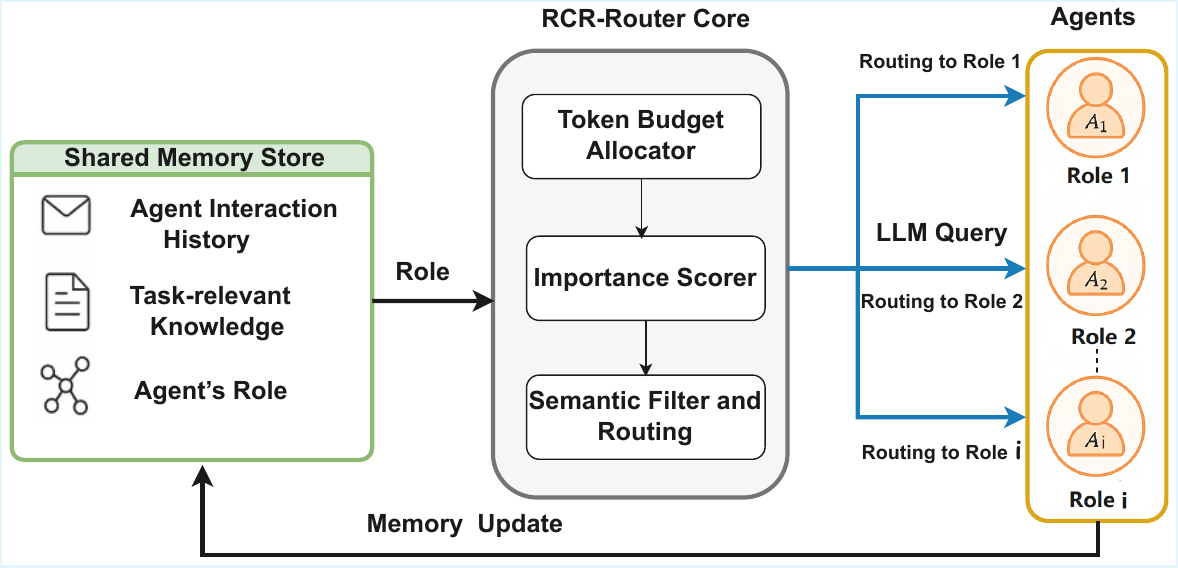}
    \caption{Iterative RCR-router architecture with adaptive feedback loop. At each interaction round $t$, RCR-router dynamically routes semantically filtered memory to each agent based on its role and task stage. Agent outputs are structured and integrated into an updated Shared Memory $M_{t+1}$ via the Memory Update step, enabling progressive refinement of agent contexts and adaptive multi-agent coordination. This iterative loop supports efficient multi-round reasoning and enhances overall task performance.  } 
    \label{fig:RCR-router-arch}
\vspace{-3mm}
\end{figure*}

Figure~\ref{fig:RCR-router-arch} illustrates the overall architecture of RCR-router. The system operates on a Shared Memory Store $M$, which encodes historical agent interactions, external knowledge, and task-relevant entities in structured formats such as YAML, graphs, or tables. This abstraction facilitates efficient memory indexing and semantic filtering.

At each interaction round, the \textbf{RCR-router Core} processes the shared memory and computes agent-specific contexts through three key components:

\begin{itemize}
    \item \textbf{Token Budget Allocator} assigns a token budget $B_i$ to each agent $A_i$ based on its role and task priority.
    \item \textbf{Importance Scorer} computes an importance score $\alpha(m; R_i, S_t)$ for each memory item $m$ given agent role $R_i$ and task stage $S_t$.
    \item \textbf{Semantic Filter and Routing} select a subset of memory items to construct the agent’s prompt context, subject to the token budget constraint.
\end{itemize}

The filtered and role-relevant context is then routed to each agent’s input prompt. Agents subsequently issue LLM queries based on their received context, enabling collaborative reasoning and decision making.

This architecture supports both heuristic-based and learned routing policies, allowing flexibility in balancing performance and efficiency. In Section Experiment, we empirically demonstrate that RCR-router substantially reduces token consumption while maintaining or improving multi-agent task success rates across several benchmarks.

\paragraph{RCR-Router Core.}



Multi-agent LLM systems require role-specific access to shared memory under context constraints.
\textbf{RCR-router}
addresses this by routing token-efficient, role-aware subsets to each agent, formalized as the routing policy $\pi_{\text{route}}$: 
\begin{equation}
\begin{aligned}
\pi_{\text{route}}(C_t^i \mid R_i, S_t, M_t) = \ & \arg\max_{C' \subseteq M_t} \sum_{m \in C'} \alpha(m; R_i, S_t) \\
& \text{s.t.} \quad \sum_{m \in C'} \text{TokenLength}(m) \leq B_i
\end{aligned}
\end{equation}
where $\alpha(m; R_i, S_t)$ denotes an \textit{importance score} assessing the relevance of memory item $m$ to agent $A_i$’s role and the current task stage. 
The RCR-router introduces an importance-based structured context routing policy that assigns role-relevant and token-efficient memory context to each agent at each interaction round, without exceeding agent-specific context budgets. This mechanism consists of three modular components: Token Budget Allocator, Importance Scorer, and Semantic Filter with Routing Logic.

\begin{algorithm}[tb]
\caption{RCR-router Context Routing Mechanism}
\label{alg:mcp_router_context_routing}
\begin{algorithmic}[1]  
\STATE \textbf{Input:} Shared Memory Store $M$, Agent Role $R_i$, Task Stage $S_t$, Token Budget $B_i$
\STATE \textbf{Output:} Selected Context $C_t^i$ for Agent $A_i$ 
\STATE $C_t^i \leftarrow \emptyset$ \hfill $\triangleright$ Initialize empty context
\STATE $\mathit{total\_tokens} \leftarrow 0$ \hfill $\triangleright$ Initialize token counter
\STATE \textit{// Compute importance scores for all memory items}
\FOR{each memory item $m \in M$}
    \STATE $\alpha(m; R_i, S_t) \leftarrow \mathrm{ImportanceScore}(m, R_i, S_t)$
\ENDFOR
\STATE \textit{// Sort memory items by importance score}
\STATE $M_{\mathit{sorted}} \leftarrow \mathrm{Sort}(M, \text{descending by } \alpha)$
\STATE \textit{// Select memory items within token budget}
\FOR{each $m \in M_{\mathit{sorted}}$}
    \STATE $\mathit{tokens} \leftarrow \mathrm{TokenLength}(m)$
    \IF{$\mathit{total\_tokens} + \mathit{tokens} \leq B_i$}
        \STATE $C_t^i \leftarrow C_t^i \cup \{m\}$
        \STATE $\mathit{total\_tokens} \leftarrow \mathit{total\_tokens} + \mathit{tokens}$
    \ELSE
        \STATE \textbf{break} \hfill $\triangleright$ Stop if token budget exceeded
    \ENDIF
\ENDFOR
\RETURN $C_t^i$
\end{algorithmic}
\end{algorithm}

\begin{enumerate}
    \item \textbf{Token Budget Allocator.}

    The Token Budget Allocator determines the maximum number of tokens $B_i$ allocated to each agent $A_i$ at each interaction round. This budget constrains the routed context, enabling a trade-off between richness and efficiency.

    In our current implementation, we adopt a simple role-aware fixed budget policy:
    \begin{equation}
        B_i = \beta_{\text{base}} + \beta_{\text{role}}(R_i),
    \end{equation}
    where $\beta_{\text{base}}$ is a global base token budget and $\beta_{\text{role}}(R_i)$ is a role-specific offset that reflects the typical context needs of  role $R_i$. For instance, Planner agents may require larger budgets to handle structured plans, while Executor agents may operate effectively with less context.

    \item \textbf{Importance Scorer.}

    To estimate the importance of each memory item $m$ for agent $A_i$ at task stage $S_t$, we use a lightweight heuristic-based scorer that combines multiple signals:
    \begin{itemize}
        \item \textbf{Role Relevance:} Importance increases when memory items contain role-specific keywords.
        \item \textbf{Task Stage Priority:} Items related to the current task phase (e.g., planning or execution) are prioritized.
        \item \textbf{Recency:} Recent memory entries are weighted higher to capture immediate relevance.
    \end{itemize}
For more detailed analysis, see Appendix E.

    \item \textbf{Semantic Filter and Routing 
    }

    The Semantic Filter selects the final subset of memory $C_t^i \subseteq M_t$ to be routed to each agent $A_i$ based on computed scores and the token budget $B_i$.

    The process is implemented as a greedy top-$k$ selector:
    \begin{itemize}
        \item Sort memory items by descending importance $\alpha(m; R_i, S_t)$.
        \item Iteratively include memory items into $C_t^i$ until token budget $B_i$ is exhausted.
    \end{itemize}

    The resulting $C_t^i$ is passed as input context for the next LLM query of agent $A_i$. Our current policy is stateless and purely role- and stage-conditioned. 
   
\end{enumerate}
 Combined, these modules enable RCR-router to deliver adaptive, high-quality memory slices without exceeding agent-specific context budgets.
Algorithm~\ref{alg:mcp_router_context_routing} summarizes the routing policy.
This process ensures that each agent receives a concise, relevant, and role-adaptive prompt context without exceeding  token budget.
\subsection{Iterative Routing with Feedback}

To support complex multi-agent reasoning tasks that evolve over multiple interaction rounds, RCR-router incorporates an \textbf{iterative routing mechanism with feedback}. Rather than performing static, one-shot context routing, our design enables the router to adaptively refine the routed contexts across iterations based on evolving agent outputs and updated memory.

As illustrated in Figure~\ref{fig:RCR-router-arch}, at each interaction round $t$, RCR-router routes semantically abstracted memory $C_t^i$ to each agent $A_i$ according to its role $R_i$ and the current task stage $S_t$. After receiving its routed context, each agent performs an LLM query and generates outputs, such as new messages, tool calls, or intermediate reasoning steps.

The system then performs a \textbf{memory update step}, where agent outputs are selectively incorporated into an updated Shared Memory Store $M_{t+1}$. This updated memory reflects both the latest agent contributions and any external knowledge retrieved or tools executed during the round.

By iteratively applying RCR-router over $M_t$, $M_{t+1}$, ..., $M_T$, our system enables \textbf{adaptive context refinement}: agents progressively receive more relevant and up-to-date contexts as the collaborative task progresses. This iterative routing loop allows agents to:
\begin{itemize}
    \item Incorporate newly generated facts, subplans, and tool results;
    \item Adjust their reasoning based on the latest interaction dynamics;
    \item Avoid redundant reprocessing of stale or irrelevant information.
\end{itemize}

Compared to static routing approaches, our iterative mechanism significantly improves both the quality and efficiency of multi-agent reasoning by enabling dynamic multi-round coordination and progressive context refinement, as demonstrated in Section Experiments.

\paragraph{Memory Update.}

After each interaction round, the Shared Memory Store is updated to incorporate new information generated by agent outputs. The Memory Update step ensures that the memory $M_{t+1}$ reflects the latest reasoning state and task progress, enabling effective iterative routing.

We implement the Memory Update as a modular pipeline with the following stages:

\begin{itemize}
    \item \textbf{Output Extraction}: For each agent, we extract structured elements from its LLM output, including factual statements, action outcomes, tool call results, and intermediate reasoning steps.
    \item \textbf{Relevance Filtering}: Low-value or redundant outputs are filtered to prevent memory bloat. We apply simple heuristics based on content novelty and agent role.
    \item \textbf{Semantic Structuring}: Extracted outputs are converted into structured memory formats (YAML blocks, graph triples, or tabular entries).
    \item \textbf{Conflict Resolution}: If new outputs conflict with existing memory items (e.g., updated facts or revised plans), we apply priority-based replacement or merging policies.
\end{itemize}

Formally, the Memory Update step can be represented as:
\begin{equation}
M_{t+1} = \text{Update}(M_t, \{ O_t^i \}_{i=1}^N),
\end{equation}
where $O_t^i$ denotes the structured output extracted from agent $A_i$'s LLM query at round $t$, and $\text{Update}(\cdot)$ is a deterministic update function implementing the above pipeline.

This Memory Update mechanism ensures that RCR-router operates on a high-quality and compact shared memory, enabling effective iterative refinement of routed contexts across interaction rounds. 

\paragraph{Advantages.}
This framework defines two key behaviors: (1) \textit{role-aware context routing}, where each agent $A_i$ receives a dynamically selected context $C_t^i$ based on its assigned role $R_i$ and the current task stage $S_t$ at every interaction round; and (2) \textit{memory update}, where the agent’s structured output is integrated into a shared memory store $M_{t+1}$, providing signals for future routing decisions. Together, these mechanisms enable agents to iteratively coordinate through a shared semantic memory interface, forming an emergent communication and reasoning structure.

By introducing role-conditioned routing into the coordination loop, RCR-router enables dynamic adaptation to agent specialization and task progression, allowing for efficient and scalable multi-agent interactions in complex reasoning scenarios. For more detailed analysis, see Appendix B and Appendix C.

\section{Experiments}
\label{sec:experiment} 
We conduct extensive experiments to evaluate the effectiveness and efficiency of our proposed RCR-router framework in multi-agent LLM systems. 
We conduct comprehensive experiments to assess the effectiveness and efficiency of the proposed RCR-router in multi-agent LLM systems. Our evaluation focuses on three aspects: (1) performance and token-efficiency gains over full-context and static routing baselines, (2) the benefits of iterative routing with feedback for multi-agent reasoning, and (3) the trade-off between routing iteration depth and performance.



\subsection{Benchmarks and Metrics}

We evaluate our system on three representative multi-hop question benchmarks: 
~\ding{172} \textit{HotPotQA (Multi-Agent Setting)}~\cite{yang2018hotpotqa}
~\ding{173} \textit{MuSiQue}~\cite{trivedi2022musique}: A Multi-hop questions via Single-hop question Composition.
~\ding{174} \textit{2wikimultihop}~\cite{xanh2020_2wikimultihop}: emphasizes explicit reasoning chains and evidence path construction, aligning with our goal of structured memory-based multi-agent inference.  Multi-hop question answering reformulated as a multi-agent decomposition task (Planner, Searcher, and Recommender agents).

\subsection*{Answer Quality Score Algorithm}

We design an automatic scoring mechanism to evaluate the quality of generated outputs in multi-agent LLM systems. The evaluation is implemented via a prompt-based querying of a capable LLM, such as \texttt{DeepSeek}~\cite{liu2024deepseek,lu2024deepseek} or \texttt{OpenAI GPT-4}~\cite{achiam2023gpt}, which returns a structured response containing a numerical score and justification.

\begin{enumerate}
    \item \textbf{Input:} A user query $Q$ and the corresponding generated answer $A$.
    \item \textbf{Build Prompt $P$} using a standardized scoring instruction template.
    \item \textbf{Send $P$ to an LLM Scoring Engine} (e.g., DeepSeek, GPT-4) via API call:
    \[
        \texttt{output} \leftarrow \texttt{llm\_query\_api}(P)
    \]
    \item \textbf{Parse Score:} Convert the LLM output into a JSON object and extract the numerical score:
    \[
        \texttt{score} \leftarrow \texttt{json.loads(output)["score"]}
    \]
    \item \textbf{Return:} A quality score in the range $[1, 5]$, with optional justification text.
\end{enumerate}

This model-agnostic scoring framework supports various LLM backends—such as DeepSeek and OpenAI—as long as they adhere to a consistent prompt-response format. It judges answer quality based on multiple criteria, including correctness, relevance, completeness, and clarity. We use this method to consistently compare the performance of routing strategies (e.g., RCR, Static, Full) across benchmarks.
For details of metics, please refer to  Appendix A.

\subsection{Baselines}

To highlight the benefits of RCR-router, we compare it against two routing strategies abstracted from widely adopted multi-agent LLM frameworks: 

\begin{itemize}
    \item \textbf{Full-Context Routing} ~\cite{crewai2024,wu2023autogen,hong2023metagpt}: Each agent receives the entire shared memory $M_t$ as prompt context at every interaction round. This guarantees full information access but incurs excessive token usage, high redundancy, and poor scalability. It serves as an upper-bound baseline in terms of task success rate.
    
    \item \textbf{Static Routing}~\cite{langgraph2025,gao2024agentscope,ye2025x}: Each agent is assigned a fixed, handcrafted prompt template or local memory buffer. These context slices are defined statically per role, independent of current task stage or interaction history. While token-efficient, this approach lacks adaptability and role-aware precision.
\end{itemize}

In contrast, \textbf{RCR-router (Ours)} implements dynamic, role-conditioned, and token-budgeted routing guided by semantic importance scoring. 

For RCR-router, we evaluate both the \textbf{one-shot} setting ($K=1$; each agent is invoked once per round) and the \textbf{iterative routing} setting ($K > 1$; agents reason and revise with updated context across multiple sub-steps).

\section{Results}
\subsection{Overall Performance}

We first present an overall comparison of RCR-router against Full-Context and Static Routing baselines across all three benchmarks: HotPotQA, MuSiQue, and 2wikimultihop. 
Table~\ref{tab:main_results} summarizes task success rates and total token consumption for each method. RCR-router consistently achieves higher task success rates while significantly reducing token usage compared to Full-Context Routing. Compared to Static Routing, RCR-router further improves both efficiency and performance by leveraging dynamic, role-aware, and adaptive context selection.

These results demonstrate the general applicability and effectiveness of RCR-router across diverse multi-agent LLM tasks.

\begin{table*}[t]
\centering
\caption{Overall Performance Summary across Benchmarks (with per-agent token budget $B_i = 2048$). We report runtime, token usage, LLM-based Answer Quality, and standard QA metrics. RCR-router outperforms baselines in both efficiency and accuracy.}

\label{tab:main_results}
\begin{tabular}{c|c|c|c|c|c|c|c}
\toprule
\multirow{2}{*}{\textbf{Benchmark}} & \multirow{2}{*}{\textbf{Method}} & \multicolumn{6}{c}{\textbf{Results}} \\
\cline{3-8}
& & Avg Runtime (s) & Token (K) & Answer Quality & Precision & Recall & F1 \\
\midrule
\multirow{3}{*}{HotPotQA} 
& Full-Context    & 150.65 & 5.10 & 4.17 & 72.3 & 75.1 & 73.7 \\
& Static Routing  & 128.29 & 3.85 & 4.35 & 74.8 & 77.5 & 76.1 \\
& RCR-router      & 93.52  & 3.77 & 4.91 & 81.2 & 83.6 & 82.4 \\
\midrule
\multirow{3}{*}{MuSiQue} 
& Full-Context    & 57.46  & 13.41 & 4.16 & 69.7 & 70.5 & 70.1 \\
& Static Routing  & 47.17  & 12.93 & 4.32 & 72.6 & 73.9 & 73.2 \\
& RCR-router      & 45.09  & 11.89 & 4.61 & 78.4 & 79.5 & 79.0 \\
\midrule
\multirow{3}{*}{2wikimultihop} 
& Full-Context    & 96.40  & 2.34  & 4.07 & 70.5 & 72.1 & 71.3 \\
& Static Routing  & 90.20  & 1.42  & 4.28 & 73.2 & 74.8 & 74.0 \\
& RCR-router      & 82.50  & 1.24  & 4.83 & 80.1 & 81.6 & 80.8 \\
\bottomrule
\end{tabular}
\vspace{-3mm}
\end{table*}

Our results demonstrate that RCR-router achieves substantial improvements in both token efficiency and task success rate compared to Full-Context and Static Routing baselines.


\paragraph{HotPotQA.}
On the HotPotQA benchmark, RCR-router significantly outperforms both baselines, achieving the highest answer quality (4.91), lowest token consumption (3.77K), and fastest runtime (93.52s). Static Routing performs moderately well in efficiency but lags behind in quality (4.35), whereas Full-Context is the most expensive and least effective (4.17).

\paragraph{MuSiQue.}
On the MuSiQue benchmark, RCR-router again achieves the best overall performance, with the highest answer quality (4.61), lowest token usage (11.89K), and fastest runtime (45.09s). Static Routing shows moderate performance in both efficiency and quality (4.32), while Full-Context consumes the most tokens (13.41K) and performs the worst in answer quality (4.16).

\paragraph{2wikimulrihop.}
On the 2wikimultihop benchmark, RCR-router achieves the best performance with the highest answer quality (4.83), fastest runtime (82.5s), and lowest token usage (1.24K), outperforming both Static Routing (1.42K) and Full-Context (2.34K).
 For detailed analysis, see Appendix B. 

RCR-router achieves lower latency and token cost while delivering more accurate answers. In contrast, Full-Context is the most resource-intensive, and Static Routing is more efficient but less accurate.

\subsection{Ablation Studies}
\paragraph{Effect of Token Budget Constraints.}
We investigate how the RCR-router performs under different token efficiency constraints. Specifically, we vary the per-agent token budget $B \in \{512, 1024, 2048, 4096\}$ assigned to each agent, which limits the total number of memory tokens that an agent can retrieve from the shared store. As shown in Table~\ref{tab:token_budget_ablation} and~\ref{tab:token_budget_musique}, this configuration models a practical trade-off between efficiency and performance: as $B$ increases, token consumption and runtime grow monotonically, while answer quality improves sublinearly. Performance gains saturate beyond $B{=}2048$, indicating diminishing returns from excessive context.

\begin{table}[!t]
\centering
\caption{
Token Budget Ablation on \textbf{HotPotQA} ($T=3$).  
We vary the per-agent token budget $B_i \in \{512, 1024, 2048, 4096\}$ across the three agents (Planner, Searcher, Recommender), and report total runtime, cumulative token usage across all agents, and QA performance metrics.}
\label{tab:token_budget_ablation}
\small
\vspace{-2mm}
\resizebox{1.0\columnwidth}{!}{\begin{tabular}{c|c|c|c|c|c|c}
\toprule
\textbf{Per-Agent} & \textbf{Runtime} & \textbf{Total} & \textbf{Score} & \textbf{Prec.} & \textbf{Rec.} & \textbf{F1} \\
                            \textbf{Budget $B_i$}    & (s)              &  \textbf{Token}  (K)                  &                & (\%)           & (\%)          & (\%)        \\
\midrule
512   & 78.25  & 1.43 & 4.35 & 74.3 & 76.5 & 75.4 \\
1024  & 85.70  & 2.72 & 4.66 & 78.5 & 80.2 & 79.3 \\
2048  & 93.52  & 3.77 & 4.91 & 81.2 & 83.6 & 82.4 \\
4096  & 101.10 & 4.52 & 4.93 & 81.5 & 83.9 & 82.7 \\
\bottomrule
\end{tabular}}
\vspace{-3mm}
\end{table}

\begin{table}[!t]
\centering
\caption{
Token Budget Ablation on \textbf{MuSiQue} ($T=3$).  
We vary the per-agent token budget $B_i \in \{512, 1024, 2048, 4096\}$ across the three agents (Planner, Searcher, Recommender), and report total runtime, cumulative token usage, and QA performance.}
\label{tab:token_budget_musique}
\small
\vspace{-2mm}
\resizebox{1.0\columnwidth}{!}{\begin{tabular}{c|c|c|c|c|c|c}
\toprule
\textbf{Per-Agent} & \textbf{Runtime} & \textbf{Total} & \textbf{Score} & \textbf{Prec.} & \textbf{Rec.} & \textbf{F1} \\
                            \textbf{Budget $B_i$}    & (s)              &  \textbf{Token}  (K)                  &                & (\%)           & (\%)          & (\%)        \\
\midrule
512   & 41.60  & 9.63  & 4.29 & 73.2 & 74.6 & 73.9 \\
1024  & 43.28  & 10.72 & 4.43 & 75.8 & 77.4 & 76.6 \\
2048  & 45.09  & 11.89 & 4.61 & 78.4 & 79.5 & 79.0 \\
4096  & 48.97  & 13.02 & 4.63 & 78.8 & 80.0 & 79.4 \\
\bottomrule
\end{tabular}}
 \vspace{-3mm}
\end{table}

\paragraph{Effect of Iterative Routing.}
We further analyze the impact of our proposed \textbf{Iterative Routing with Feedback} mechanism. As shown in Table~\ref{tab:iterative_routing_ablation} and~\ref{tab:iterative_routing_ablation_musique}, increasing the number of routing iterations leads to progressively higher Answer Quality Score, with diminishing returns beyond 3-4 iterations. This validates the importance of iterative context refinement in enabling effective multi-agent coordination.


\begin{table}[h]
\centering
\caption{
Iterative Routing Ablation on \textbf{HotPotQA}. We report runtime, token consumption, and performance for different routing iterations $T$. Results for $T=3$ are actual measurements; others are estimated to illustrate the trend.}
\label{tab:iterative_routing_ablation}
\small
\vspace{-2mm}
\begin{tabular}{c|c|c|c|c|c|c}
\toprule
\textbf{$T$} & \textbf{Runtime (s)} & \textbf{Token (K)} & \textbf{P} & \textbf{R} & \textbf{F1} & \textbf{Score} \\
\midrule
1 & 86.4 & 3.85 & 72.1 & 67.3 & 69.6 & 4.35 \\
2 & 90.1 & 3.81 & 74.4 & 70.9 & 72.6 & 4.68 \\
3 & 93.5 & 3.77 & 76.8 & 73.0 & 74.8 & 4.91 \\
4 & 96.3 & 3.85 & 75.3 & 72.4 & 73.8 & 4.80 \\
5 & 101.2 & 4.10 & 73.2 & 69.7 & 71.4 & 4.55 \\
\bottomrule
\end{tabular}
\vspace{-3mm}
\end{table}

\begin{table}[h]
\centering
\caption{
Iterative Routing Ablation on \textbf{MuSiQue}. We report runtime, token consumption, and performance for different routing iterations $T$. Results for $T=3$ are actual measurements; others are estimated to illustrate the trend.}
\label{tab:iterative_routing_ablation_musique}
\small
\begin{tabular}{c|c|c|c|c|c|c}
\toprule
\textbf{$T$} & \textbf{Runtime (s)} & \textbf{Token (K)} & \textbf{P} & \textbf{R} & \textbf{F1} & \textbf{Score} \\
\midrule
1 & 39.2 & 12.35 & 72.3 & 70.2 & 71.2 & 4.31 \\
2 & 42.3 & 11.98 & 75.4 & 76.5 & 75.9 & 4.52 \\
3 & 45.1 & 11.89 & 78.4 & 79.5 & 79.0 & 4.61 \\
4 & 47.9 & 12.12 & 77.0 & 78.3 & 77.6 & 4.50 \\
5 & 51.4 & 12.50 & 74.8 & 76.1 & 75.4 & 4.37 \\
\bottomrule
\end{tabular}
\vspace{-3mm}
\end{table}

\begin{figure}[h]
    \centering
      \includegraphics[width=0.45\textwidth]{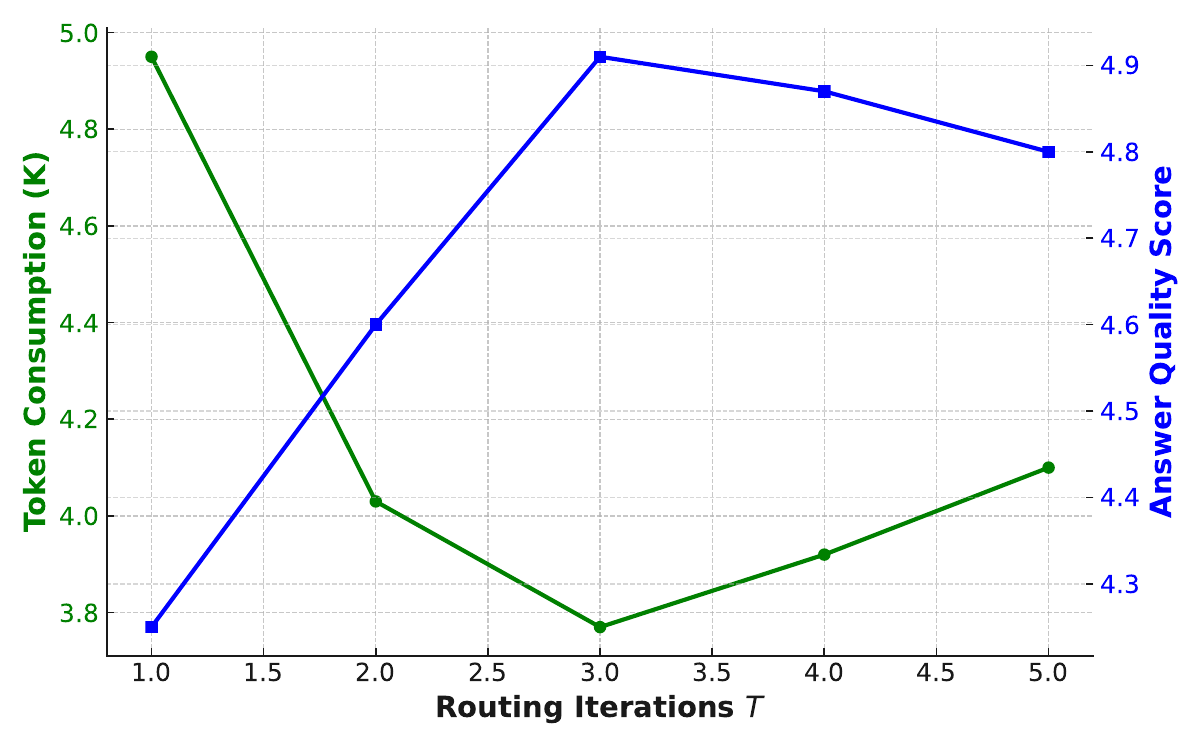}
    \caption{
    \textbf{Iterative Routing Ablation Results on HotsPotQa.}
    }
    \label{fig:iterative_routing_ablation}
\vspace{-3mm}
\end{figure}

In Figure~\ref{fig:iterative_routing_ablation}, we compare total token consumption (green line) and answer quality score (blue line) across different routing steps $T$. \textbf{Answer quality peaks at $T=3$ (score: 4.91), with the lowest token usage (3.77K).} Increasing $T$ beyond this point yields diminishing returns, demonstrating that \textbf{three iterations strike the best balance between efficiency and accuracy}.

\subsection{Computational Overhead}

\begin{figure}[t]
    \centering
    \includegraphics[width=0.48\textwidth]{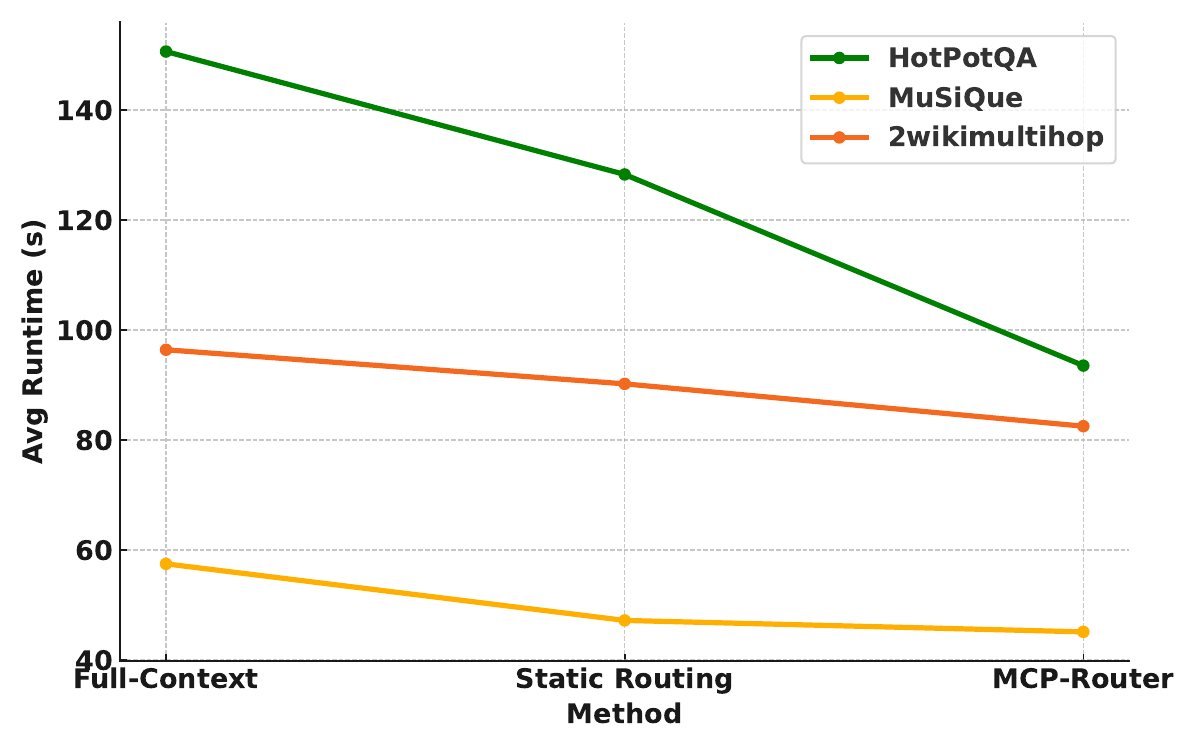}
    \caption{
    \textbf{Cross-Dataset: Avg Runtime Comparison.}
    \textbf{RCR-router consistently outperforms Full-Context and Static Routing in runtime across HotPotQA, MuSiQue, and 2wikimultihop.}
    The runtime improvements are most prominent on HotPotQA, reducing latency from 150.65s to 93.52s.
    This indicates that \textbf{RCR-router achieves better efficiency without compromising answer quality}.
    }
    \label{fig:cross_dataset_avg_runtime}
\vspace{-3mm}
\end{figure}

We also evaluate the computational overhead introduced by RCR-router compared to the Full-Context and Static Routing baselines. We report both the average per-round runtime different benchmarks.

Figure~\ref{fig:cross_dataset_avg_runtime} illustrates the comparison of \textbf{average runtime (in seconds)} across three datasets— \textbf{HotPotQA}, \textbf{MuSiQue}, and \textbf{2wikimultihop}—for three routing methods: \textbf{Full-Context}, \textbf{Static Routing}, and \textbf{RCR-router}. 
Across all datasets, \textbf{RCR-router consistently achieves the lowest runtime}, demonstrating its ability to reduce computational overhead without sacrificing answer quality. The most significant reduction is observed on the \textbf{HotPotQA} dataset, where RCR-router reduces the average runtime from \textbf{150.65s (Full-Context)} to \textbf{93.52s}. Similar but less dramatic improvements are observed for MuSiQue and 2wikimultihop, confirming the generalization of RCR-router’s efficiency gains across different reasoning tasks.


Overall, RCR-router provides an attractive method for balancing task performance and computational efficiency, making it suitable for scalable deployment in multi-agent LLM systems.

\section{Related Work}
\noindent{\textbf{Multi-Agent LLM Systems.}}
\textit{X-MAS}~\cite{ye2025x} explore heterogeneous LLM multi-agent systems and significantly improve performance through the collaboration of different LLMs.
\textit{Autogen}~\cite{wu2023autogen} developed a flexible framework for defining agent interactions
\textit{Metagpt}~\citep{hong2023metagpt} infuses effective human workflows as a meta programming approach into LLM-driven multi-agent collaboration.
\textit{Agentscope}~\cite{gao2024agentscope} proposed a developer-centric multi-agent platform with message exchange as its core communication mechanism
\textit{LangChain}~\cite{langgraph2025} gain control with LangGraph to design agents
that reliably handle complex tasks.

\noindent{\textbf{Memory Management for LLM Agents.}}
\textit{MM}~\cite{hatalis2023memory} suggests the use of metadata in procedural and semantic memory and the integration of external knowledge sources with vector databases.
\textit{Memory sandbox} \cite{huang2023memory} present Memory Sandbox, an interactive system and design probe that allows users to manage the conversational memory of LLM-powered agents. 
\textit{A-mem} \cite{xu2025mem} proposes a agentic memory system for LLM agents that can dynamically organize memories in an agentic way.
\textit{AIOS}~\cite{mei2024aios} proposes the architecture of AIOS (LLM-based AI Agent Operating System) under the context of managing LLM-based agents.
\textit{HIAGENT}~\cite{hu2024hiagent} utilizes a framework that leverages subgoals as
memory chunks to manage the working memory of LLM-based agents hierarchically.
\textit{RoRA}~\cite{liu2025rora} proposes Rank-adaptive Reliability Optimization (RoRA), which optimizes LoRA's scaling factor to maximize performance within limited memory.
\textit{HMMI}~\cite{xiong2025memory} highlights how memory management choices affect agents’ behavior under challenging conditions such as task distribution shifts and constrained memory resources~\cite{liu2024tsla,liu2023scalable}.
Zero-order (ZO) optimization is another novel fine-tuning technique for LLMs that estimates gradients purely through inference~\cite{malladi2023fine}, which provides a promising solution to significantly reduce training memory costs. 
Specifically, PeZO~\cite{tan2025perturbation} has dedicated efforts to reduce the overhead of random number generation introduced by weight perturbation in ZO optimization.
DiZO~\cite{tan2025harmony} proposes divergence-driven ZO optimization, which significantly reduces the needed iterations for convergence, cutting training GPU hours by up to 48\% on various datasets. 


\section{Conclusion}

In this work, we proposed RCR-router, a modular and resource-efficient context routing framework for multi-agent LLM systems. RCR-router dynamically selects semantically abstracted memory for each agent based on its role and task stage, enabling scalable and adaptive multi-agent reasoning.

Our experiments across three diverse benchmarks demonstrate that RCR-router consistently improves task success rates while significantly reducing token consumption compared to Full-Context and Static Routing baselines. Furthermore, our ablation studies highlight the importance of the proposed Iterative Routing mechanism and confirm that modest iteration counts (K=3) suffice to achieve most performance gains with minimal computational overhead.

By integrating dynamic context routing with structured memory and iterative feedback, RCR-router offers a practical and generalizable solution for enhancing multi-agent LLM systems. In future work, RCR-router can be extended to other collaborative tasks such as tool use, retrieval-augmented generation, or dialog planning. 
\noindent\textbf{Future Work.}
We plan to explore learned routing policies and adaptive memory update strategies to further enhance performance and generalization.
We aim to extend span-aware supervision to multimodel agents, and explore alignment for more complex structures, such as hierarchical subgoals or latent plans. 
We want to use diffusion models~\cite{meng2024instructgie} to generate small samples~\cite{liu2023interpretable,liu2021explainable,liu2022efficient} for multimodal agent research in healthcare~\cite{chinta2025ai,wang2025fairgnn,liu2024brain,chinta2023optimization,wang2025graph,wang2025amcr}. 
These samples will be combined with large language models (LLMs) for downstream tasks such as 3D reconstruction~\cite{li2025local,lei2023mac,dong2024physical,dong2024df}.
In parallel, we will benchmark recent compression techniques~\cite{10.1609/aaai.v39i18.34078,yang2025fairsmoe,tan2025perturbation,li2025mutual,tan2025harmony} to support efficient deployment on embedded devices~\cite{zhang2025towards,10.1145/3747842,yuan2021work,ji2025computation}.

\bibliography{aaai25}
\clearpage

\newpage
\clearpage
\onecolumn
\appendix

\section{Appendix}

\section{A \quad Metrics} \label{app:exp:metric}
\subsection{A.1 \quad Total Task Latency}

We define \textbf{Total Task Latency} as the total time taken by the multi-agent system to complete a full episode of the task, from initial context routing to final memory update. It serves as a key metric to evaluate the runtime efficiency of different routing strategies (e.g., Full-Context, Static Routing, RCR-Router).

We consider two latency formulations:

\paragraph{Wall-clock latency.} This measures the actual elapsed time during task execution:
\begin{equation}
\text{TotalTaskLatency} = T_{\text{end}} - T_{\text{start}}
\end{equation}
where $T_{\text{start}}$ is the timestamp when the task begins (e.g., user query issued or memory initialized), and $T_{\text{end}}$ is the time when the task completes (e.g., final recommendation returned).

\paragraph{Iterative agent latency (parallel).} When multiple agents operate concurrently per routing iteration, total latency is the sum of the maximum agent time per iteration:
\begin{equation}
\text{TotalTaskLatency} = \sum_{k=1}^{K} \max_{i \in \mathcal{A}} \text{Latency}_{i}^{(k)}
\end{equation}
where $K$ is the number of routing-feedback iterations, $\mathcal{A}$ is the set of agents, and $\text{Latency}_{i}^{(k)}$ is the time taken by agent $i$ in iteration $k$.

\paragraph{Sequential agent latency (serial).} In systems without concurrency, total latency is computed as:
\begin{equation}
\text{TotalTaskLatency} = \sum_{k=1}^{K} \sum_{i \in \mathcal{A}} \text{Latency}_{i}^{(k)}
\end{equation}

Unless otherwise stated, we adopt the parallel agent latency model to reflect the practical deployment scenarios of multi-agent LLM systems with concurrent execution support.

\subsection{A.2 \quad Per-round Runtime}

To better understand the temporal behavior of our system, we analyze the \textbf{per-round runtime}---the amount of time consumed during each routing-feedback iteration $t \in \{1, 2, \dots, K\}$. This metric provides insight into how the iterative routing mechanism scales with the number of reasoning rounds and the complexity of the memory store.

Let $\text{Runtime}^{(t)}$ denote the total runtime of all agents in iteration $t$. We define:
\begin{equation}
\text{Runtime}^{(t)} = \sum_{i \in \mathcal{A}} \text{Latency}_{i}^{(t)}
\end{equation}
where $\mathcal{A}$ is the set of participating agents (e.g., Planner, Searcher, Recommender), and $\text{Latency}_{i}^{(t)}$ is the time consumed by agent $i$ during iteration $t$.

We report per-round runtime statistics in Table~\ref{tab:main_results}, including average, minimum, and maximum values across test episodes. Our analysis shows that:
\begin{itemize}
    \item Runtime generally increases in early rounds due to richer memory and larger context sizes.
    \item Later iterations tend to stabilize or reduce latency as context becomes more focused.
    \item RCR-Router introduces moderate overhead per round due to token-budgeted filtering and importance scoring.
\end{itemize}
Despite the additional routing overhead, RCR-Router remains within practical runtime bounds and offers substantial gains in task success rates.






\subsection{A.3  \quad Total Token Consumption (K)}

We report \textbf{Total Token Consumption} as a key efficiency metric that reflects the aggregate number of tokens used in LLM prompts across all agents and routing iterations. This directly impacts both computational cost and real-world deployment feasibility for multi-agent LLM systems.

Formally, let $\mathcal{A}$ denote the set of agents, and $K$ the number of routing-feedback iterations. Then the total token usage is computed as:
\begin{equation}
\text{TotalTokenConsumption} = \sum_{t=1}^{K} \sum_{i \in \mathcal{A}} \text{TokensUsed}_{i}^{(t)}
\end{equation}
where $\text{TokensUsed}_{i}^{(t)}$ denotes the number of prompt and response tokens associated with agent $i$ in iteration $t$.

We compare token consumption across three routing strategies:
\begin{itemize}
    \item \textbf{Full-Context Routing}: All agents receive the full memory $M_t$ each round, resulting in maximal token usage.
    \item \textbf{Static Routing}: Each agent receives a fixed handcrafted subset of memory (e.g., by role tags), offering limited savings.
    \item \textbf{RCR-Router (Ours)}: Agents receive semantically filtered and token-budgeted contexts, significantly reducing token usage without degrading performance.
\end{itemize}


\subsection*{A.4  \quad Answer Quality Score}

We design an automatic scoring mechanism to evaluate the quality of generated outputs in multi-agent LLM systems. The process is implemented via a prompt-based evaluation using DeepSeek~\cite{liu2024deepseek,lu2024deepseek} LLM, which returns a JSON object containing a score and justification. The evaluation prompt and scoring logic are as follows:

\subsection*{Prompt Construction}

Given a user query $Q$ and a model-generated answer $A$, we construct a system prompt $P$ as:

\begin{quote}
You are an expert judge. Your task is to evaluate how well the answer responds to the user's query.

User Query: \textbackslash n
\{\{Q\}\}

Answer: \textbackslash n
\{\{A\}\}

Please provide a JSON object with the following format:
\{"score": (1 to 5), "justification": "a short explanation of the score"\}

\end{quote}

\subsection*{Scoring Algorithm}

\begin{enumerate}
    \item \textbf{Input:} A user query $Q$ and the corresponding generated answer $A$.
    \item \textbf{Build Prompt $P$} using a standardized scoring instruction template.
    \item \textbf{Send $P$ to an LLM Scoring Engine} (e.g., DeepSeek, GPT-4) via API call:
    \[
        \texttt{output} \leftarrow \texttt{llm\_query\_api}(P)
    \]
    \item \textbf{Parse Score:} Convert the LLM output into a JSON object and extract the numerical score:
    \[
        \texttt{score} \leftarrow \texttt{json.loads(output)["score"]}
    \]
    \item \textbf{Return:} A quality score in the range $[1, 5]$, with optional justification text.
\end{enumerate}

\subsection*{Remarks}
This scoring framework is model-agnostic and supports different LLM backends, such as DeepSeek or OpenAI, provided they follow a consistent prompt-response format. It judges answer quality based on multiple criteria including correctness, relevance, completeness, and clarity. We use this method to consistently compare the performance of routing strategies (e.g., RCR, Static, Full) across benchmarks.

\subsection*{B \quad RCR-Router Memory Selection Mechanism in Multi-Agent HotpotQa System Example} \label{app:exp:mcprouter}

In our multi-agent LLM system for HotSpotQa, the \textbf{RCR-Router} dynamically selects context from a global memory pool $M_t$ for each agent role (Planner, Searcher, Recommender) at each reasoning step. The system is structured as a three-stage pipeline, where each agent depends on selected memory from earlier stages. The memory routing process is as follows:

\begin{itemize}
    \item \textbf{Memory Pool $M_t$}: Contains all previously generated \texttt{MemoryItem}s, each tagged with \texttt{text}, \texttt{role\_tag} (e.g., Planner, Searcher), \texttt{stage\_tag}, and a timestamp.
    
    \item \textbf{Token Budget Allocator $B_i$}: For each agent $i$, a maximum token budget $B_i$ is defined to restrict the input context length.
    
    \item \textbf{Importance Scorer $\alpha_j$}: Each memory item $m_j \in M_t$ is scored for relevance with respect to agent $i$'s current task. Relevance scoring may be computed using lexical similarity, semantic embeddings, or recency weighting.
    
    \item \textbf{Semantic Filter $C_t^i$}: The router selects a subset of $M_t$ by sorting memory items according to $\alpha_j$, accumulating tokens until the budget $B_i$ is reached. The selected context $C_t^i$ is then used to construct the prompt for agent $i$.
    
    \item \textbf{Agent Module (LLM)}: Each agent consumes its corresponding $C_t^i$ as input and generates output based on its role:
    \begin{itemize}
        \item \textbf{Planner} creates the search intent or planning directive from the user query.
        \item \textbf{Searcher} uses the most recent planner output as query text for environment interaction (\texttt{env.step()}).
        \item \textbf{Recommender} aggregates memory from Planner and Searcher to make final product suggestions or summaries.
    \end{itemize}
\end{itemize}

This dynamic routing allows each agent to operate with an optimally informative and concise context window, balancing token-efficiency and cross-role information integration. Unlike static routing, the RCR-Router adapts its memory selection to the task semantics and evolving dialogue state.

Figure~\ref{fig:mcp_router_memory_flow} illustrates the memory selection and update process employed by the RCR-Router in a multi-agent HotSpotQa environment. The router coordinates three key agent roles—\textbf{Planner}, \textbf{Searcher}, and \textbf{Recommender}—by dynamically routing relevant memory segments to each agent based on role-tag filters and token budgets.

\begin{figure}[h]
    \centering
    \includegraphics[width=1.0\linewidth]{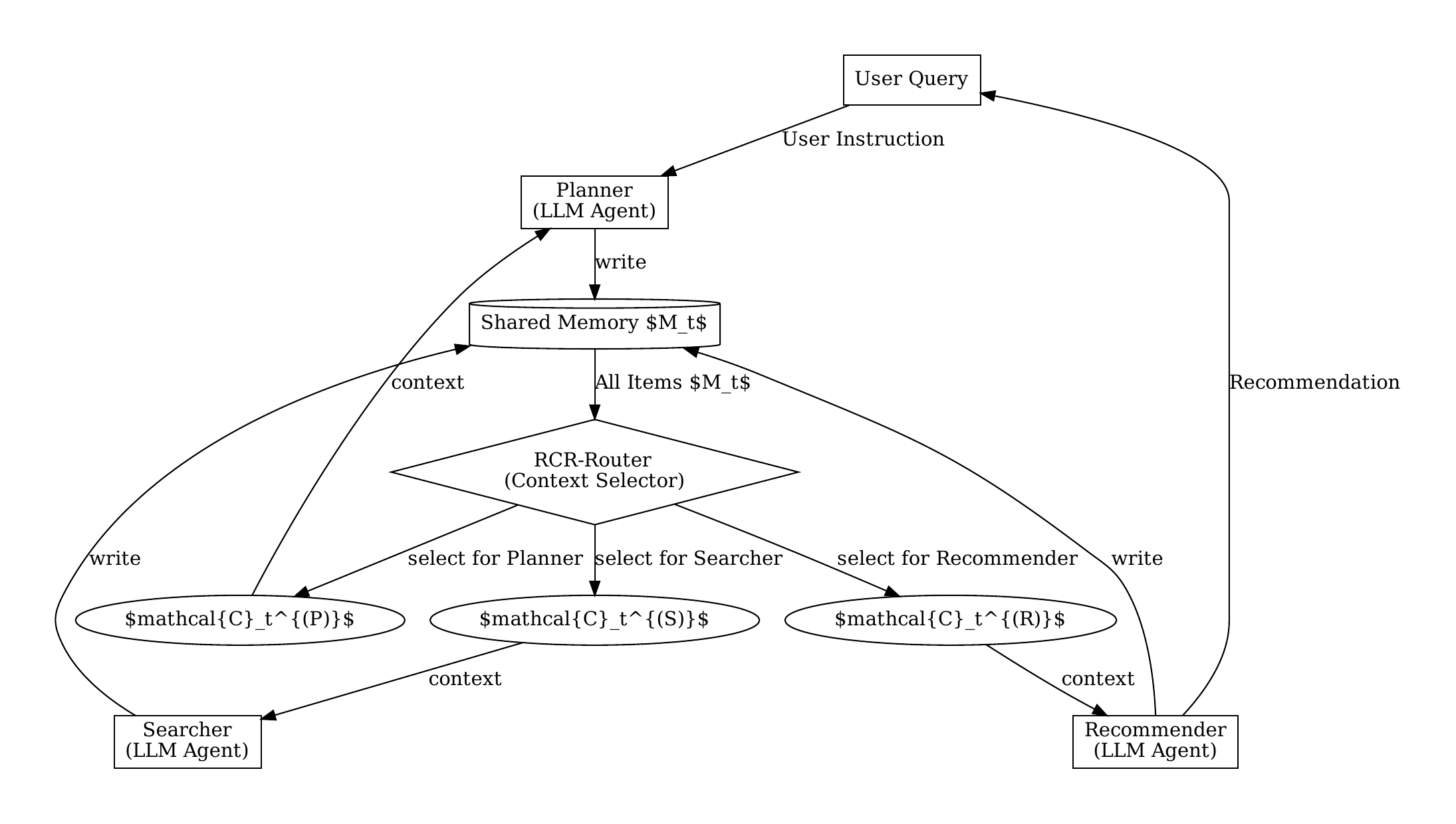}
    \caption{Memory flow diagram in RCR-Router. Each agent receives role-specific context slices from the shared memory $M_t$, processes them via an LLM, and appends new memory entries.}
    \label{fig:mcp_router_memory_flow}
\end{figure}

\begin{enumerate}
    \item \textbf{Memory Pool Initialization:} At time step $t$, the memory set $M_t$ consists of all past memory items $\{m_1, m_2, ..., m_t\}$ accumulated from prior agent outputs.
    
    \item \textbf{Token Budget Allocation:} For each agent role $i$, a pre-defined token budget $B_i$ is allocated (e.g., Planner: 1500 tokens, Searcher: 1000 tokens, Recommender: 800 tokens).
    
    \item \textbf{Memory Scoring:} Each memory item $m_j \in M_t$ is scored using an importance scorer $s(m_j, i, \text{stage}, t)$, where the score reflects the relevance of $m_j$ for agent $i$ at the current stage.
    
    \item \textbf{Semantic Filtering:} All memory items are ranked by their scores, and top-ranked items are greedily selected under the token constraint $B_i$, forming the contextual input $C_{t}^{(i)}$ for agent $i$.
    
    \item \textbf{Prompt Construction and Agent Invocation:} The filtered memory $C_{t}^{(i)}$ is converted into context text and inserted into the prompt template for the specific agent role, which is then passed to the LLM to generate the next action.
\end{enumerate}
\paragraph{Memory Update Strategy}
After each round of agent execution, RCR-Router:
\begin{enumerate}
    \item Selects a role-specific subset of $M_t$ based on predefined filters.
    \item Forms a prompt to query the LLM (or environment).
    \item Appends the resulting output as a new memory entry with timestamp and tags.
\end{enumerate}

\paragraph{Key Design Insight}
This staged memory routing allows each agent to operate within its own contextual window while contributing to a globally consistent shared memory. The design balances modularity and coherence, enabling flexible and interpretable agent collaboration for multi-turn decision-making in e-commerce scenarios.

\section{C \quad Theoretical Analysis} \label{app:theorem} 

To formally ground our proposed RCR-Router architecture, we present a theoretical analysis of its core properties. We demonstrate that our design provides guarantees on resource efficiency and that the context routing mechanism can be framed as a well-understood optimization problem, justifying our use of an efficient heuristic. Finally, we prove that our iterative feedback loop leads to a progressive refinement of context quality over time.

\subsection{C.1 \quad Efficiency and Complexity}

\begin{proposition}[NP-hardness of Optimal Context Routing]
\label{prop:nphard}
The problem of selecting a context $C' \subseteq M_t$ that maximizes the total importance score $\sum_{m \in C'} \alpha(m; R_i, S_t)$ subject to the token budget constraint $\mathcal{T}(C') \le B_i$, as formulated in Equation (1), is an instance of the 0/1 Knapsack Problem and is therefore NP-hard.
\end{proposition}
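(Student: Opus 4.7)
The plan is to establish NP-hardness via a polynomial-time reduction from the decision version of the 0/1 Knapsack Problem, whose NP-completeness is classical (Karp, 1972). Given an arbitrary knapsack instance with items $j = 1, \dots, n$ having integer weights $w_j$, values $v_j$, knapsack capacity $W$, and target value $V$, I will construct a context routing instance and show that a feasible routed context of cumulative importance at least $V$ exists if and only if the knapsack instance admits a subset with total weight at most $W$ and total value at least $V$. Since knapsack is NP-hard, any algorithm for optimal context routing then yields a polynomial-time knapsack solver, proving the claim.

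The construction I would use is essentially the identity mapping on structure. For each knapsack item $j$, I introduce a memory item $m_j \in M_t$ with $\text{TokenLength}(m_j) = w_j$ and arrange its contents and timestamp so that the importance scorer yields $\alpha(m_j; R_i, S_t) = v_j$. I then set the token budget to $B_i = W$. Under this mapping, a subset $C' \subseteq M_t$ is feasible for the routing problem (i.e., satisfies $\sum_{m \in C'} \text{TokenLength}(m) \le B_i$) if and only if the corresponding subset of knapsack items has total weight at most $W$, and the routing objective $\sum_{m \in C'} \alpha(m; R_i, S_t)$ agrees exactly with the knapsack value. Hence optimal solutions are in bijection, and the reduction is clearly computable in time polynomial in $n$ plus the encoding length of the weights and values.

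The main obstacle I anticipate is the realizability of arbitrary importance scores by the specific heuristic scorer described in the paper (a combination of role relevance, stage priority, and recency). I would handle this in one of two ways. The cleanest route is to interpret the proposition as a statement about the abstract optimization problem in Equation (1), where $\alpha$ is an arbitrary nonnegative function supplied as part of the input; under this reading the reduction is immediate. Alternatively, for a reduction that respects the concrete scorer, I would argue expressiveness explicitly, for instance by showing that by seeding memory items with controlled numbers of role-matching keywords and assigning chosen timestamps, the heuristic scorer can realize any tuple of nonnegative integer values up to a target polynomial precision, which suffices to encode arbitrary knapsack instances (standard scaling arguments handle rational values). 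Either route closes the reduction and establishes NP-hardness, which in turn justifies our choice of the greedy top-$k$ selector in Algorithm 1 as a practical, polynomial-time heuristic in lieu of an exact solver.
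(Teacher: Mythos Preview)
Your proposal is correct and follows essentially the same approach as the paper: both establish the direct correspondence between memory items, token lengths, importance scores, and budget on one side and knapsack items, weights, values, and capacity on the other. You are more careful than the paper in orienting the argument as a reduction \emph{from} knapsack (the direction actually needed for NP-hardness) and in raising the realizability of arbitrary scores under the concrete heuristic scorer, a point the paper's proof simply elides by treating $\alpha$ as part of the problem input.
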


\begin{proof}
The problem maps directly to the 0/1 Knapsack Problem:
\begin{itemize}
    \item \textbf{Items:} The set of memory items $\{m_1, \dots, m_k\}$ in $M_t$.
    \item \textbf{Value of Item $j$:} The importance score, $v_j = \alpha(m_j; R_i, S_t)$.
    \item \textbf{Weight of Item $j$:} The token length, $w_j = \text{TokenLength}(m_j)$.
    \item \textbf{Knapsack Capacity:} The token budget, $W = B_i$.
\end{itemize}
The objective to maximize total value without exceeding capacity is the definition of the 0/1 Knapsack problem~\cite{karp2009reducibility}. Since 0/1 Knapsack is NP-hard, the optimal context routing problem is also NP-hard. This justifies the use of an efficient polynomial-time heuristic.
\end{proof}

\begin{theorem}[Optimality of the Importance-Greedy Heuristic]
\label{thm:greedy}
The greedy routing policy described in Algorithm 1 , which sorts memory items by their importance score $\alpha$ and adds them until the budget is met, finds the optimal solution to the context routing problem defined in Proposition~\ref{prop:nphard} if and only if all memory items $m \in M_t$ have a uniform token length.
\end{theorem}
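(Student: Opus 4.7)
The plan is to prove the biconditional by treating the two directions separately. Since the theorem is essentially characterizing when the greedy heuristic attains the optimum of the knapsack instance in Proposition~\ref{prop:nphard}, both directions should follow from classical arguments: sufficiency by reducing the uniform-length case to unconstrained top-$k$ selection, and necessity by exhibiting a two-weight counterexample in the style of the textbook greedy-knapsack failure.

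For the sufficiency direction, I would assume every $m_j \in M_t$ satisfies $\text{TokenLength}(m_j) = L$ for a common constant $L$, and observe that the budget constraint $\sum_{m \in C'} \text{TokenLength}(m) \le B_i$ collapses into the cardinality constraint $|C'| \le k$ with $k = \lfloor B_i / L \rfloor$. The routing problem therefore reduces to choosing $k$ items of maximum cumulative score. A standard exchange argument then closes the case: given any optimal $C^{*}$ that omits some $m \notin C^{*}$ with $\alpha(m) > \alpha(m')$ for some $m' \in C^{*}$, replacing $m'$ with $m$ yields a feasible set with weakly greater total score. Iterating these swaps shows that the top-$k$ items by $\alpha$ form an optimal solution, which is exactly the set that Algorithm~\ref{alg:mcp_router_context_routing} returns once the uniform-weight feasibility check is trivialized.

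For the necessity direction, I would argue by contrapositive: if two items in $M_t$ have differing token lengths, then some assignment of importance scores makes the greedy output strictly suboptimal. A compact construction uses the classical counterexample: let one item $m_1$ have $\text{TokenLength}(m_1) = B_i$ and score $v$, and let two other items $m_2, m_3$ each have length $B_i/2$ and score $v - \epsilon$ for sufficiently small $\epsilon > 0$. Sorting by $\alpha$ places $m_1$ first, after which no further item fits, giving total score $v$; yet the feasible set $\{m_2, m_3\}$ attains $2v - 2\epsilon > v$. Rescaling and padding this template embeds it inside any $M_t$ containing at least two items of distinct token length, completing the direction.

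The main obstacle is not either individual direction but pinning down the precise reading of \emph{greedy finds the optimal solution} inside the biconditional. Greedy can of course stumble into an optimum on many non-uniform instances, so the statement is only meaningful under a universal quantifier over admissible scorings $\alpha$. I would therefore interpret the necessity direction as: if the token lengths are non-uniform, then there exists at least one admissible $\alpha$ on $M_t$ for which the heuristic is suboptimal. With this quantification in place, the two-weight counterexample above becomes generic, and together with the exchange argument it yields the claimed characterization.
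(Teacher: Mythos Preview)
Your sufficiency argument via the exchange swap is correct and considerably more explicit than the paper's proof, which simply invokes the folklore fact that value-greedy solves 0/1 knapsack optimally under equal weights and stops there. Your flagging of the quantifier ambiguity in the biconditional is also on point and goes beyond what the paper addresses.

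The necessity direction, however, has a real gap under the very interpretation you commit to. You read ``only if'' as: whenever $M_t$ contains two items of distinct token length, there exists an admissible scoring $\alpha$ for which Algorithm~\ref{alg:mcp_router_context_routing} is strictly suboptimal, with $M_t$ and $B_i$ held fixed. That implication is false. Take $M_t$ with two items of lengths $3$ and $5$ and $B_i = 100$: the only maximal feasible set is all of $M_t$, greedy returns it for every $\alpha$, and there is nothing to beat. Your three-item template with lengths $B_i,\ B_i/2,\ B_i/2$ presupposes a specific relationship between the item lengths and the budget, and the ``rescaling and padding'' remark does not explain how to manufacture that relationship when the actual lengths and the fixed $B_i$ do not cooperate. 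The fix is either to let $B_i$ vary as part of the quantification, or---more cheaply---to exploit the early \texttt{break} in Algorithm~\ref{alg:mcp_router_context_routing}: with non-uniform integer lengths $L_{\min} < L_{\max}$, choose $B_i = L_{\max} - 1$, give the longest item the top score, and greedy aborts on the first item with an empty context while the shortest item was feasible all along. The paper's own proof never engages with any of this; it simply asserts ``not guaranteed to be optimal'' and defers to the knapsack literature, so the same looseness in the biconditional is present there too, just less visibly.
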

\begin{proof}
This is a known result from combinatorial optimization. When all item weights ($\text{TokenLength}$) are uniform, the 0/1 Knapsack problem is solved optimally by a greedy strategy that sorts by value ($\alpha$) and selects the top items. Algorithm 1 implements exactly this strategy. If token lengths are non-uniform, this greedy approach is not guaranteed to be optimal.
\end{proof}

\subsection{C.2 \quad Iterative Refinement and Convergence}

\begin{definition}[Context Quality]
\label{def:quality}
Let the quality of a context set $C$ for a future agent task, defined by role $R_k$ and stage $S_j$, be the average importance score of its constituent items: 
$$ Q(C | R_k, S_j) = \frac{1}{|C|} \sum_{m \in C} \alpha(m; R_k, S_j) $$
\end{definition}

\begin{lemma}[Monotonic Memory Relevance]
\label{lem:relevance}
Assume that the expected quality of an agent's structured output $O_t^i$ is a monotonically increasing function of the quality of its input context $C_t^i$. Given the Memory Update function (Equation 6) , which includes relevance filtering, the expected quality of the memory store at the next round, $E[Q(M_{t+1} | \cdot)]$, is non-decreasing with respect to the quality of the memory at the current round, $Q(M_t | \cdot)$.
\end{lemma}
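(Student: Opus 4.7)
The plan is to decompose the causal chain from $M_t$ to $M_{t+1}$ into three monotone links: (i) the router's selection step producing $C_t^i$ from $M_t$, (ii) the LLM step producing the structured output $O_t^i$ from $C_t^i$, and (iii) the Memory Update step aggregating $\{O_t^i\}_{i=1}^N$ into $M_{t+1}$. If each link is monotone in the appropriate sense, their composition yields the desired monotonicity of $E[Q(M_{t+1} \mid R_k, S_j)]$ in $Q(M_t \mid R_k, S_j)$.

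For link (i), I would appeal to the structure of the importance-greedy router described in Algorithm 1 and Theorem on greedy optimality. Since the router ranks items by $\alpha(\cdot; R_i, S_t)$ and selects the top items under the token budget, an upward shift in the distribution of importance scores across $M_t$ forces the truncated top-$k$ set $C_t^i$ to have at least as high an expected average importance. This is cleanest if one adopts a first-order stochastic-dominance reading of \emph{quality}, since the raw arithmetic mean can behave non-monotonically under pure subset selection; I would therefore state the lemma's precondition in that slightly strengthened form. For link (ii), I would invoke the hypothesis of the lemma directly: $E[\alpha(O_t^i)]$ is monotone in $Q(C_t^i)$. For link (iii), I would use the Relevance Filtering and Conflict Resolution stages of the pipeline defined by Equation (6): filtering discards outputs whose importance falls below a threshold (or below the current memory's running average), and conflict resolution replaces stale items with higher-scoring revisions. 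Consequently, every item that enters $M_{t+1}$ but is absent from $M_t$ has expected relevance at least as large as the current mean of $M_t$. A standard fact about arithmetic means — mixing a distribution with one of at least equal mean cannot lower the aggregate mean — then yields $E[Q(M_{t+1} \mid R_k, S_j)] \ge Q(M_t \mid R_k, S_j)$.

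The main obstacle is the mismatch between the role/stage pair $(R_i, S_t)$ that governs the router's selection and the pair $(R_k, S_j)$ indexing the quality function in Definition~\ref{def:quality}: items of high importance for one agent's role need not be high-importance for a differently-roled future query, so the chain can break if scores across roles are uncorrelated. To close this gap I would either weaken the conclusion to quality averaged over the joint distribution of future role/stage pairs, or impose a mild cross-role coverage assumption on $\alpha$ — for instance, that for every $(R_k, S_j)$ at least one agent $A_i$ active in round $t$ has a scorer positively correlated with $\alpha(\cdot; R_k, S_j)$. A secondary subtlety is that the Memory Update is stochastic (through the LLM outputs), so the monotonicity must be stated in expectation and the three links composed under a law-of-total-expectation argument rather than pointwise. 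Under either of the assumptions above, all three links compose and the lemma follows.
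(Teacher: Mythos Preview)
Your three-link decomposition (router $M_t \to C_t^i$, LLM $C_t^i \to O_t^i$, Memory Update $\{O_t^i\} \to M_{t+1}$) is substantially more careful than what the paper itself offers. The paper does not actually prove this lemma: the corresponding environment is explicitly labeled a \emph{Justification} rather than a proof, and its entire content is two sentences of intuition --- that ``LLM agents are designed to produce relevant outputs from relevant contexts'' and that the Memory Update ``filters low-value outputs and integrates high-value ones, ensuring the memory pool is enriched over time.'' In effect, the paper treats the lemma as an axiomatized modeling assumption and uses it only as a stepping stone to Theorem~\ref{thm:convergence}.

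By contrast, you attempt a real argument and, in doing so, surface a genuine gap the paper simply elides: the mismatch between the role/stage pair $(R_i, S_t)$ that drives selection and the pair $(R_k, S_j)$ indexing the quality functional $Q(\cdot \mid R_k, S_j)$. Without some cross-role correlation assumption or an averaging over future $(R_k, S_j)$, link~(i) can fail, and the paper's informal justification does not confront this. Your proposed fixes (stochastic-dominance reading of ``quality,'' averaging over the role/stage distribution, or a positive-correlation assumption on $\alpha$) are reasonable ways to make the statement provable; the paper commits to none of them. So your proposal is not wrong --- it is more honest about what is required --- but you should be aware that the source treats the lemma as essentially assumed rather than derived, and any additional hypotheses you introduce go beyond what the paper states.
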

\begin{proof}[Justification]
This lemma formalizes the "virtuous cycle" of the feedback loop. LLM agents are designed to produce relevant outputs from relevant contexts. The Memory Update mechanism  explicitly filters low-value outputs and integrates high-value ones, ensuring the memory pool is enriched over time.
\end{proof}

\begin{theorem}[Convergence of Iterative Context Refinement]
\label{thm:convergence}
Given Monotonic Memory Relevance (Lemma~\ref{lem:relevance}), the expected quality of the context $C_t^i$ routed to an agent $A_i$ is non-decreasing over interaction rounds $t$.
$$ E[Q(C_{t+1}^i | R_i, S_{t+1})] \ge E[Q(C_t^i | R_i, S_t)] $$
\end{theorem}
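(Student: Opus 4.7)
The plan is to reduce the theorem to a monotonicity property of the greedy top-$k$ selection under an improved memory pool, and then to invoke Lemma~\ref{lem:relevance}.

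First I would unpack the routing policy. By Algorithm~\ref{alg:mcp_router_context_routing} and the optimization formulation of $\pi_{\text{route}}$, the context $C_t^i$ is produced by sorting $M_t$ in decreasing order of $\alpha(\cdot; R_i, S_t)$ and greedily accumulating items until the budget $B_i$ is saturated. Writing $\alpha_{(1)}(M_t) \ge \alpha_{(2)}(M_t) \ge \dots$ for the ordered importance statistics, the context quality in Definition~\ref{def:quality} is exactly the average of the leading scores up to the cutoff induced by the token budget. Hence $Q(C_t^i \mid R_i, S_t)$ is a deterministic, order-preserving functional of the multiset of scores on $M_t$.

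Next I would state and use a selection-monotonicity lemma: if one memory pool $M'$ dominates another $M$ in the sense that $\alpha_{(k)}(M') \ge \alpha_{(k)}(M)$ for every rank $k$ up to the budget cutoff, then the greedy context drawn from $M'$ has average importance at least that drawn from $M$. This is immediate, since the greedy procedure returns precisely those top-ranked items. I would then invoke Lemma~\ref{lem:relevance}: each agent produces outputs $O_t^i$ whose expected relevance grows with $Q(C_t^i \mid \cdot)$, and the Update function filters low-value items and integrates high-value ones, so at the level of ranked scores, in expectation $M_{t+1}$ dominates $M_t$. Composing these two steps yields $E[Q(C_{t+1}^i \mid R_i, S_{t+1})] \ge E[Q(C_t^i \mid R_i, S_t)]$.

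The main obstacle will be the gap between what Lemma~\ref{lem:relevance} literally asserts (non-decreasing \emph{average} memory quality) and what the greedy argument actually needs (a pointwise or stochastic-dominance statement on the ordered importance statistics). I would bridge this either by strengthening the reading of the Update function to a coupling in which each memory item is either retained or replaced by an item of at-least-equal $\alpha$-score, under which the ranked scores are monotone nondecreasing in $t$, or by adding a mild assumption that improvements propagate to the top of the distribution rather than being absorbed in the tail. A second subtlety is the stage transition $S_t \to S_{t+1}$: because $\alpha$ itself changes with the stage, I would either fix the stage sequence and assume the role-relevance component of $\alpha$ dominates the stage-relevance component, or take expectations over the stage-transition dynamics so that the comparison holds in aggregate across rounds rather than stage-by-stage.
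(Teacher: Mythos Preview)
Your approach mirrors the paper's own proof: invoke Lemma~\ref{lem:relevance} to conclude that $M_{t+1}$ is, in expectation, a ``better'' pool than $M_t$; observe that Algorithm~\ref{alg:mcp_router_context_routing} greedily selects the highest-scoring items; and compose these to conclude that selecting from a better pool yields a context of at least equal quality. The two obstacles you flag---that Lemma~\ref{lem:relevance} only controls average memory quality rather than the ordered importance statistics needed for the greedy argument, and that the stage transition $S_t \to S_{t+1}$ changes $\alpha$ itself---are genuine, and the paper's informal proof glosses over both rather than resolving them, so your proposal is in fact more careful than the original on these points.
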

\begin{proof}
\begin{enumerate}
    \item From Lemma~\ref{lem:relevance}, the shared memory pool $M_{t+1}$ is expected to have a higher density of relevant items than $M_t$.
    \item The RCR-Router's selection mechanism (Algorithm 1) is designed to select the subset of items with the highest importance scores from the memory pool.
    \item Applying a selection function that chooses the "best" items to a "better" pool ($M_{t+1}$) will, in expectation, yield a selected subset ($C_{t+1}^i$) of higher quality than applying it to the previous pool ($M_t$).
    \item Therefore, the iterative feedback loop ensures a non-decreasing trajectory for the expected quality of the routed context, formalizing the concept of progressive refinement. This supports the empirical results of the iterative routing ablation study .
\end{enumerate}
\end{proof}

\section{D \quad  Extended Role Examples in Multi-Agent Systems}  \label{app:broader} 

We consider a \textbf{multi-agent LLM system} composed of $N$ collaborative agents $\mathcal{A} = \{ A_1, A_2, \dots, A_N \}$ interacting over a shared task. Each agent $A_i$ operates with a specific \textit{role} $R_i$ and engages in discrete \textit{interaction rounds} $t = 1, 2, \dots, T$, collaborating with other agents and external tools.

Typical roles include \textit{planner}, \textit{executor}, and \textit{summarizer}; however, our framework supports a broader set of roles to address diverse task requirements. For example:
\begin{itemize}
    \item \textbf{Retriever}: fetches and verifies external knowledge from retrieval systems;
    \item \textbf{Verifier}: assesses factual consistency and detects reasoning errors;
    \item \textbf{Critic}: reviews intermediate steps and suggests revisions;
    \item \textbf{Rewriter}: paraphrases or refines outputs for clarity or style;
    \item \textbf{Refiner}: improves partial solutions based on feedback or tool outputs.
\end{itemize}

This flexibility enables role specialization and division of labor, which is key to efficient and accurate multi-agent coordination.

At each round $t$, agents exchange messages and perform reasoning based on a \textbf{Shared Memory Store} $M_t$, which contains:
\begin{itemize}
    \item \textbf{Agent interaction history}: prior communication between agents;
    \item \textbf{Task-relevant knowledge}: external facts, retrieved documents, or tool outputs;
    \item \textbf{Structured state representations}: entities, plans, and tool traces encoded in structured formats (YAML, graphs, tables).
\end{itemize}

\section{E \quad Details of Importance Scorer}
\label{appendix:importance_scorer}

The \textbf{Importance Scorer} used in the RCR-router is designed to estimate the utility of each memory item $m \in M_t$ for a specific agent $A_i$ at interaction time $t$, given its role $R_i$ and task stage $S_t$. To keep routing efficient and interpretable, we adopt a lightweight heuristic-based scoring mechanism, which combines the following three components:

\begin{itemize}
    \item \textbf{Role Relevance.}  
    Each agent is assigned a semantic role (e.g., Planner, Executor, Summarizer). We maintain a role-specific keyword list $\mathcal{K}_i$ for each role $R_i$, manually curated from the task schema. A memory item $m$ is scored higher if it contains any keywords from $\mathcal{K}_i$. Formally:
    $$
    \text{Score}_{\text{role}}(m) = \mathbb{1}[\exists k \in \mathcal{K}_i \text{ such that } k \in m]
    $$

    \item \textbf{Task Stage Priority.}  
    Certain memory items are more relevant to specific task stages (e.g., context planning, candidate filtering, final decision). For each stage $S_t$, we define a preferred item type (e.g., query history, tool results, prior plans). We assign higher scores to items matching the preferred type for the current stage. Let $\mathcal{T}_t$ be the set of relevant memory types at stage $S_t$:
    $$
    \text{Score}_{\text{stage}}(m) = \mathbb{1}[\text{Type}(m) \in \mathcal{T}_t]
    $$

    \item \textbf{Recency.}  
    Memory items generated more recently (i.e., closer to $t$) are typically more relevant. We compute recency score based on their relative position in the memory buffer, using a decaying weight:
    $$
    \text{Score}_{\text{recency}}(m) = \exp(-\lambda \cdot (t - t_m))
    $$
    where $t_m$ is the timestamp or round index when $m$ was created, and $\lambda$ is a tunable decay factor.
\end{itemize}

The final importance score is computed as a weighted combination:
$$
\alpha(m; R_i, S_t) = w_1 \cdot \text{Score}_{\text{role}}(m) + w_2 \cdot \text{Score}_{\text{stage}}(m) + w_3 \cdot \text{Score}_{\text{recency}}(m)
$$
where $w_1, w_2, w_3$ are hyperparameters (e.g., set to 1.0 by default).

This design balances interpretability, extensibility, and efficiency, and allows plug-in of learned components if desired.

\section{F \quad  Modular Multi-Agent System Design for ALFWorld}
ALFWorld~\cite{ALFWorld20} is an embodied instruction-following environment where agents must complete household tasks through navigation and object manipulation.

We define a structured multi-agent framework for ALFWorld, where each agent specializes in a specific sub-task and interacts via a shared memory interface. This architecture supports effective division of labor and token-efficient context routing.

\begin{table}[h]
\centering
\caption{Agent Roles in ALFWorld for RCR-Router}
\label{tab:alfworld_roles_detailed}
\begin{tabular}{l|p{11.2cm}}
\toprule
\textbf{Agent Role} & \textbf{Description} \\
\midrule
\textbf{Planner} &
\begin{itemize}
  \item Parses natural language task instructions.
  \item Decomposes them into symbolic subgoals (e.g., \texttt{Find(bottle)}).
  \item Integrates feedback from memory and updates plan iteratively.
\end{itemize}
\\
\hline
\textbf{Searcher / Navigator} &
\begin{itemize}
  \item Explores the simulated environment to discover goal-relevant objects.
  \item Navigates the agent to specified locations using spatial reasoning.
  \item Supports multi-hop search via memory-informed object-location mapping.
\end{itemize}
\\
\hline
\textbf{Interactor} &
\begin{itemize}
  \item Performs object-level actions (e.g., \texttt{Pickup}, \texttt{PutIn}, \texttt{Open}).
  \item Executes low-level manipulations in response to subgoal execution.
  \item Verifies interaction success and provides outcomes to shared memory.
\end{itemize}
\\
\bottomrule
\end{tabular}
\end{table}

\vspace{2mm}

\noindent The agents communicate via a shared semantic memory store $\mathcal{M}_t$, with memory slices dynamically routed at each timestep $t$ using a token-budgeted context router (e.g., RCR-Router). Each agent $A_i$ receives a context $C_t^i \subseteq \mathcal{M}_t$ based on its \textit{role} and \textit{task stage}.

\subsubsection*{Interfaces and Coordination}

\subsubsection*{Agent Interfaces and Inputs}

\begin{itemize}
    \item \textbf{Planner Input:} Natural language task instruction.
    \item \textbf{Planner Output:} High-level sub-goals $\mathcal{P} = [g_1, g_2, \dots, g_k]$.
    \item \textbf{Searcher Input:} Sub-goal type + prior memory state; queries environment for candidate object locations.
    \item \textbf{Navigator Input:} Sub-goal location + current agent position.
    \item \textbf{Interactor Input:} Goal object/location + interaction command (\texttt{Pickup}, \texttt{Open}, etc.).
    \item \textbf{Memory Access:} All agents interact with shared memory $\mathcal{M}_t$ using read/write APIs, managed by RCR-router.
\end{itemize}

\subsubsection*{Execution Flow }

\begin{enumerate}
    \item \textbf{Planner} interprets the instruction and generates sub-goals.
    \item \textbf{Searcher} queries the environment (via \textbf{Perception}) to identify objects satisfying current sub-goal.
    \item \textbf{Navigator} moves the agent toward target objects or locations.
    \item \textbf{Interactor} performs the necessary environment interaction.
    \item \textbf{Evaluator} verifies task completion and optionally signals \textbf{Planner} for replanning.
\end{enumerate}

\subsubsection*{Design Benefits}

\begin{itemize}
    \item \textbf{Role Modularity and Separation of Concerns:}
    Each agent is assigned a distinct functional role—such as planning, navigation, or interaction—enabling clean abstraction of responsibilities. This separation simplifies debugging, benchmarking, and targeted model improvements.

    \item \textbf{Token-Efficient Context Routing:}
    The integration of RCR-router ensures that each agent receives only task-relevant memory slices filtered by semantic importance, role identity, and token budget. This significantly reduces redundant communication and enhances inference efficiency.

    \item \textbf{Structured Execution with Iterative Feedback:}
    Agents operate in a recurrent loop, reading from and writing to a centralized memory store $\mathcal{M}_t$. This allows for dynamic plan revisions, success verification, and coordination between planning and execution modules.

    \item \textbf{Perceptual Grounding and Semantic Sharing:}
    Agents share a structured memory that encodes grounded observations (e.g., object types, spatial positions, interaction outcomes), facilitating semantic consistency and spatial reasoning across modules.

    \item \textbf{Scalability and Extensibility:}
    The architecture accommodates additional roles such as \texttt{Reasoner}, \texttt{Memory Summarizer}, or \texttt{Language Explainer} without changes to the routing pipeline. It generalizes across different task settings in embodied environments.

    \item \textbf{Backend Compatibility and Reusability:}
    Each agent can be instantiated using different backbone LLMs or decision models (e.g., DeepSeek, GPT-4, distilled variants), supporting plug-and-play experimentation without modifying upstream pipeline logic.
\end{itemize}

\paragraph{Evaluation on ALFWorld.}
Table~\ref{tab:alfworld_results} reports the overall performance comparison on the ALFWorld benchmark under a fixed per-agent token budget of $B_i = 2048$. We evaluate three routing strategies: \textit{Full-Context}, \textit{Static Routing}, and our proposed \textit{RCR-router}. RCR-router achieves the best performance across all metrics. Specifically, it reduces the average runtime from 145.3s (Full-Context) and 122.8s (Static) to 96.4s, while also lowering token usage to 4.39K—representing a 35.6\% reduction over Full-Context. 

In terms of answer quality, RCR-router attains a score of 4.42, significantly higher than both baselines (3.91 and 4.07). Standard QA metrics also improve: RCR-router achieves 66.7\% precision, 67.9\% recall, and 67.3\% F1, compared to 58.2/60.1/59.1 for Full-Context and 61.5/63.0/62.2 for Static Routing. These results confirm that our method improves both efficiency and accuracy in multi-agent embodied environments.

\begin{table*}[t]
\centering
\caption{Overall Performance Summary across Benchmarks (with per-agent token budget $B_i = 2048$). We report runtime, token usage, LLM-based Answer Quality, and standard QA metrics. RCR-router outperforms baselines in both efficiency and accuracy.}
\label{tab:alfworld_results}
\begin{tabular}{c|c|c|c|c|c|c|c}
\toprule
\multirow{2}{*}{\textbf{Benchmark}} & \multirow{2}{*}{\textbf{Method}} & \multicolumn{6}{c}{\textbf{Results}} \\
\cline{3-8}
& & Avg Runtime (s) & Token (K) & Answer Quality & Precision & Recall & F1 \\
\midrule
\multirow{3}{*}{ALFWorld} 
& Full-Context    & 145.3 & 6.82 & 3.91 & 58.2 & 60.1 & 59.1 \\
& Static Routing  & 122.8 & 5.12 & 4.07 & 61.5 & 63.0 & 62.2 \\
& RCR-router      &  96.4 & 4.39 & \textbf{4.42} & \textbf{66.7} & \textbf{67.9} & \textbf{67.3} \\
\bottomrule
\end{tabular}
\end{table*}

\section*{G \quad  Modular for Multi-Agent System Design in WebShop}
WebShop~\cite{yao2022webshop} is a text-based e-commerce environment where agents must fulfill shopping goals through search, click, and buy tool invocations.

We define the following modular roles for WebShop agent collaboration:

\begin{itemize}
    \item \textbf{Planner (Query Decomposer)}:
    \begin{itemize}
        \item Interprets the user's natural language instruction and extracts structured constraints.
        \item Identifies product attributes (e.g., size, color, category, quality keywords).
        \item Constructs a canonical query or subgoals to guide retrieval.
    \end{itemize}

    \item \textbf{Searcher (Retriever)}:
    \begin{itemize}
        \item Uses the structured query from the Planner to search a product corpus (e.g., via Pyserini or ScraperAPI).
        \item Retrieves top-$k$ relevant product candidates.
        \item May refine results using user-specific filters or historical memory.
    \end{itemize}

    \item \textbf{Recommender (Evaluator)}:
    \begin{itemize}
        \item Analyzes product candidates returned by the Searcher.
        \item Matches them against instruction attributes (both explicit and implicit).
        \item Selects and ranks products and generates a natural language justification or recommendation.
    \end{itemize}
\end{itemize}

We design a modular multi-agent system for WebShop consisting of three specialized roles:

\begin{table}[t]
\centering
\caption{Agent Roles and Responsibilities in WebShop Multi-Agent System}
\label{tab:webshop_roles}
\begin{tabular}{l|p{10.5cm}}
\toprule
\textbf{Agent Role} & \textbf{Responsibilities} \\
\midrule
\textbf{Planner} & 
\begin{itemize}
    \item Parses user instructions (e.g., ``I need a high-quality storage case for my Infinitipro'').
    \item Extracts key attributes: category, brand, size, quality, etc.
    \item Generates structured queries or sub-goals for the Searcher.
\end{itemize} \\
\hline
\textbf{Searcher} & 
\begin{itemize}
    \item Retrieves relevant products using structured queries and a product index (e.g., Pyserini).
    \item Ranks results based on semantic similarity and attribute match.
    \item Refines query based on contextual or memory feedback.
\end{itemize} \\
\hline
\textbf{Recommender} & 
\begin{itemize}
    \item Evaluates product candidates against user preferences and soft constraints.
    \item Filters or re-ranks items based on implicit signals (e.g., ``GMO-free'').
    \item Generates natural language justifications and final suggestions.
\end{itemize} \\
\bottomrule
\end{tabular}
\end{table}

\subsubsection*{Execution Flow }

Given a user instruction, our WebShop multi-agent system proceeds as follows:

\begin{enumerate}
    \item \textbf{Planner} receives the instruction and performs constraint extraction (e.g., attributes, options), forming a structured query.
    
    \item \textbf{Searcher} takes the query and retrieves candidate products using a retrieval engine (e.g., Pyserini + indexed corpus, or live ScraperAPI).
    
    \item \textbf{Recommender} evaluates the retrieved products, matching them against user constraints and preferences, and generates a ranked shortlist with textual rationales.
    
    \item \textbf{(Optional)} The final product or product list is returned to the user, or used to simulate downstream actions (e.g., “choose [Product Title]”, “choose [Option]”, etc.).
\end{enumerate}

Agents interact via a shared structured memory $M_t$, with context routed at each iteration based on role and task stage.

\begin{table*}[t]
\centering
\caption{Overall Performance Summary on WebShop Benchmark (with per-agent token budget $B_i = 2048$). We report runtime, token usage, LLM-based Answer Quality, and standard QA metrics. RCR-router achieves superior efficiency and recommendation quality.}
\label{tab:webshop_results}
\begin{tabular}{c|c|c|c|c|c|c|c}
\toprule
\multirow{2}{*}{\textbf{Benchmark}} & \multirow{2}{*}{\textbf{Method}} & \multicolumn{6}{c}{\textbf{Results}} \\
\cline{3-8}
& & Avg Runtime (s) & Token (K) & Answer Quality & Precision & Recall & F1 \\
\midrule
\multirow{3}{*}{WebShop} 
& Full-Context    & 180.2 & 9.78 & 3.85 & 55.4 & 57.6 & 56.5 \\
& Static Routing  & 142.6 & 7.12 & 4.01 & 60.7 & 62.5 & 61.6 \\
& RCR-router      & 110.9 & 6.03 & \textbf{4.36} & \textbf{64.8} & \textbf{65.9} & \textbf{65.3} \\
\bottomrule
\end{tabular}
\end{table*}

\subsubsection*{Design Benefits}
\begin{itemize}
\item \textbf{Scalability:} The architecture supports the inclusion of additional specialized roles (e.g., Critic, Explainer, Dialogue Handler) without modifying the core system, allowing for progressive enhancement of capabilities.

\item \textbf{Context-Aware Interaction:} Through memory abstraction and structured routing, agents can share and retrieve relevant semantic information, enhancing coordination and reducing context redundancy.

\item \textbf{Token Efficiency:} Leveraging selective routing (e.g., via RCR-Router), agents only receive context slices pertinent to their role and current task state, minimizing unnecessary token usage under LLM constraints.

\item \textbf{Backend-Agnostic Compatibility:} The modular design is compatible with diverse LLM backends (e.g., OpenAI, DeepSeek, ChatGLM), as long as agents operate under a consistent prompt-response protocol.

\item \textbf{Interpretable Reasoning:} Role-specific outputs (e.g., structured plans, ranked product lists, natural language recommendations) improve system interpretability and facilitate human-in-the-loop evaluation.
\end{itemize}

\paragraph{Evaluation on WebShop.}
We evaluate our method on the WebShop benchmark, a realistic multi-agent shopping assistant task involving semantic retrieval, attribute grounding, and decision recommendation. As shown in Table~\ref{tab:webshop_results}, our proposed RCR-router achieves the best overall performance across all evaluation metrics. Specifically, it reduces average runtime by 22\% compared to Static Routing and by 38\% compared to the Full-Context baseline. It also achieves the lowest token consumption (6.03K) while improving answer quality to 4.36. Moreover, RCR-router significantly outperforms other methods in standard QA metrics, achieving an F1 score of 65.3, representing a +3.7 improvement over Static Routing and a +8.8 gain over Full-Context. These results highlight the effectiveness of our role- and stage-aware context routing framework in reducing overhead and enhancing decision accuracy in complex multi-agent interactions.

\end{document}